\documentclass[12pt,twocolumn]{article}
\pdfoutput=1

\usepackage[utf8]{inputenc}
\usepackage[english]{babel}

\usepackage[letterpaper,hmargin={3cm,2cm},vmargin={2.5cm,2cm}]{geometry}
\usepackage{times}

\usepackage{standalone, tikz, pgf}

\usepackage{amsmath}
\usepackage{amssymb}
\usepackage{amsthm}
\usepackage{bm}

\bibliographystyle{plain}

\newcommand{\mi}{\mathrm{i}}

\newcommand{\cis}{\mathrm{cis}}
\newcommand{\itan}{\mathrm{atan2}}

\newtheorem{definition}{Definition}
\newtheorem{proposition}{Proposition}

\title{Position Equations of a 3RPR Planar Manipulator}
\author{Sureyya Sahin\\
\texttt{sahin508@gmail.com}}
\date{}

\begin{document}
\maketitle
\begin{abstract}
\textit{We study parametric equations, which describe the position of an in-parallel planar manipulator. We discuss isometries in the Gauss plane, then we write the loop-closure equations in terms of the rotations as the parameters.}
\end{abstract}
\section{Introduction}
A 3RPR parallel manipulator, which is a closed-loop linkage, consists of a moving platform--a ternary link--and three simple kinematic chains, each composed of revolute, prismatic, and revolute joints, attached to vertices of the moving platform. The position equations are essential in mathematical modeling of the 3RPR parallel manipulators. 

In the literature, the position equations are generally obtained by means of parametrizations in the Euclidean plane. For example, Shigley and Uicker \cite{shigley_theory_1995} used loop-closure equations in obtaining the equations describing the position of closed-loop linkages in terms of trigonometric functions. As another example, Wampler \cite{wampler1999solving} studied parametric equations of closed-loop linkages in the Gauss plane. In this study, we obtain the position equations of the 3RPR parallel manipulator in the Gauss plane with a similar approach to that of Wampler \cite{wampler1999solving}, which he applied to a closed-loop mechanism in his study. However, we will concentrate more on geometry instead of mechanical motions; thus, we emphasize isometries and transformations, and use loop-closure equations directly.

The organization of this short paper is as follows: we discuss the geometric foundations for obtaining the position equations for the 3RPR in-parallel manipulator in Section \ref{sec:geom_foundations}; specifically, we discuss the complex numbers, isometries and transformations in the Gauss plane. In Section \ref{sec:pos_equations}, we derive the loop--closure equations of the manipulator.


\section{Complex Numbers}\label{sec:geom_foundations}
In this section, we study properties of complex numbers. Additionally, we discuss isometries in the Gauss plane.
\subsection{Complex Numbers and the Gauss Plane}
We define complex numbers, which represent points in the Cartesian plane, as a set $\mathbb{C}=\{x+\mi y:x,y\in\mathbb{R},\mi^2=-1\}$. Denoting the $x$-axis of the Cartesian plane as the real axis and $y$-axis as the imaginary axis, we obtain a different interpretation of the Cartesian plane, and we name it as the Gauss plane \cite{dodge_euclidean_2004}. 

We affiliate a point $Z$ in the Gauss plane by a complex number $z$ and name $x+\mi y$ as the rectangular form of $z$. Thus, the set of complex numbers $\mathbb{C}$, with identity elements for complex addition and multiplication being $0$ and $1$, forms a field. As an alternative to the rectangular form, for a given point, we define $r$ as the length of the line segment from the origin to the point (amplitude), and $\theta$ as the angle formed by the real axis and the line segment (argument). Thus, a point $Z$ in the Gauss plane can be expressed as $z=r\cis{(\theta)}$, where $\cis$ is defined in Definition \ref{def:cis}.
\begin{definition}\label{def:cis}
We let the set of points on a unit circle and its associated argument be denoted as $\mathbb{S}$ and $\theta$, respectively. Then, we define $\cis:\mathbb{S}\rightarrow \mathbb{C}$ such that $\cis{(\theta)}=\cos{(\theta)}+\mi \sin{(\theta)}$.
\end{definition}

In addition to using $\mathbb{C}$ for representing points in the Gauss plane, we can also express vectors by means of $\mathbb{C}$. Hence, a directed line from the origin at $0$ to the point $x+\mi y$ represents a vector. The orthonormal vectors coincident with the coordinate axis would be $\bm{1}$ and $\bm{i}$, respectively. Thus, any vector can be written as a linear combination of $\bm{1}$ and $\bm{i}$ as $\bm{z}=x\bm{1}+y\,\bm{i}$. We drop $\bm{1}$ from this expression \cite{dodge_euclidean_2004}; furthermore, since $\bm{i}$ has only imaginary component, we use $\mi$ in vector expressions instead of $\bm{i}$. Therefore, we express a vector $\bm{z}$ by using the standard representations of the complex numbers, so $\bm{z}=z$ in $\mathbb{C}$.

The conjugate of a complex number $z = x+\mi y\in\mathbb{C}$, which is written as $\overline{z} = x-\mi y$, can be viewed as an operation on $\mathbb{C}$. So, we provide Proposition \ref{prop:conjaddmul} \cite{pedoe_geometry_1988} on the conjugate of addition and multiplication of complex numbers.
\begin{proposition}\label{prop:conjaddmul}
 Let $z_1,z_2,\hdots,z_n\in\mathbb{C}$. Then,
 \begin{equation}
\overline{z_1+z_2+\cdots+z_n}=\overline{z}_1+\overline{z}_2+\cdots+\overline{
z}_n
\label{eq:additive_conj}
 \end{equation}
 and furthermore,
 \begin{equation}
\overline{z_1\,z_2\,\cdots\,z_n}=\overline{z}_1\,\overline{z}_2\,\cdots\,
\overline{z}_n
\label{eq:multip_conj}
 \end{equation}
\label{prop:3}
\end{proposition}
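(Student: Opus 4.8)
The plan is to reduce both identities to their two-term versions and then extend to arbitrary $n$ by induction. First I would verify the base case $n=2$ by direct computation in rectangular form. Writing $z_1 = x_1 + \mi y_1$ and $z_2 = x_2 + \mi y_2$, the sum $z_1 + z_2 = (x_1+x_2) + \mi(y_1+y_2)$ has conjugate $(x_1+x_2) - \mi(y_1+y_2)$, which regroups as $(x_1 - \mi y_1) + (x_2 - \mi y_2) = \overline{z}_1 + \overline{z}_2$. For the product, expanding $z_1 z_2 = (x_1 x_2 - y_1 y_2) + \mi(x_1 y_2 + x_2 y_1)$ and negating the imaginary part yields exactly the expansion of $\overline{z}_1\,\overline{z}_2 = (x_1 - \mi y_1)(x_2 - \mi y_2)$, which establishes \eqref{eq:additive_conj} and \eqref{eq:multip_conj} in the case $n=2$.

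Next I would promote each base case to general $n$ by induction. Assuming the additive identity for $n-1$ terms, I would treat $z_1 + \cdots + z_n$ as the two-term sum $(z_1 + \cdots + z_{n-1}) + z_n$, apply the $n=2$ result to split the conjugate, and then invoke the induction hypothesis on the leading group; associativity of complex addition guarantees that this bracketing is harmless. The multiplicative identity follows by the identical argument, grouping $z_1 \cdots z_n$ as $(z_1 \cdots z_{n-1})\, z_n$ and using associativity of complex multiplication together with the $n=2$ product case.

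I do not expect a genuine obstacle here, since the field structure of $\mathbb{C}$ noted above supplies the associativity and distributivity needed for the expansions. The only point that demands care is keeping the inductive grouping consistent, so that the induction hypothesis is applied to a single conjugated subexpression $\overline{z_1 + \cdots + z_{n-1}}$ (respectively $\overline{z_1 \cdots z_{n-1}}$) rather than to an already-expanded sum of conjugates; with the two-term identity proved first, each inductive step peels off exactly one factor and the remaining verification is routine.
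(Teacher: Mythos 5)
Your proposal is correct and follows essentially the same route as the paper's own proof: both verify the $n=2$ case by direct computation in rectangular form and then induct by grouping $z_1+\cdots+z_{n-1}$ (respectively $z_1\cdots z_{n-1}$) as a single term and applying the two-term identity together with the induction hypothesis. Your explicit remark about keeping the inductive grouping consistent is a point the paper leaves implicit, but the argument is the same.
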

\begin{proof}
  We prove the equation \ref{eq:additive_conj} first. The case of $n=1$ is trivially satisfied. Then, we pick $n=2$ to obtain 
$\overline{z_1+z_2}=\overline{x_1+\mi\,y_1+x_2+\mi\,y_2}=x_1+x_2-\mi\,
(y_1+y_2)=x_1-\mi\,y_1+x_2-\mi\,y_2=\overline{z}_1+\overline{z}_2$. Thus, the 
equation holds for $n=2$. Now, we suppose that the equation is true for 
$n-1$. Therefore, $\overline{z_1+z_2+\cdots+z_{n-1}}=\overline{z}_1+\overline{z}
_2+\cdots+\overline{z}_{n-1}$. Hence, we can write
\begin{eqnarray*}
\overline{z_1+z_2+\cdots+z_n}&=&\overline{z_1+z_2+\cdots+z_{n-1}}\\
  {}&&+\overline{z}_n\\
&=&\overline{z}_1+\overline{z}_2+\cdots+\overline{z}_{n-1}\\
  {}&&+\overline{z}_n
\end{eqnarray*}
by using a similar argument to the case of $n=2$, and by substituting the expression for $n-1$. Therefore, we conclude the equation \ref{eq:additive_conj} 
holds in $\mathbb{C}$.

We prove the equation \ref{eq:multip_conj} similarly. Letting $n=1$, we observe that the equation becomes trivially $\overline{z}_1=\overline{z}_1$. 
Next, we let $n=2$. Thus, we write:
\begin{eqnarray*}
 \overline{z_1\,z_2}&=&\overline{(x_1+\mi\,y_1)(x_2+\mi\,y_2)}\\
 &=&(x_1x_2-y_1y_2)-\mi\,(x_1y_2+y_1x_2)\\
 &=&(x_1-\mi\,y_1)(x_2-\mi\,y_2)\\
 &=&\overline{z}_1\,\overline{z}_2
\end{eqnarray*}
Now, we suppose that the equation \ref{eq:multip_conj} holds for $n-1$. Thus, $\overline{z_1z_2\cdots 
z_{n-1}}= \overline{z}_1\overline{z}_2\cdots\overline{z}_{n-1}$. Therefore, for $n$, we write:
\begin{eqnarray*}
 \overline{z_1z_2\cdots z_n} &=& \overline{z_1z_2\cdots 
z_{n-1}}\,\overline{z}_n\\
&=&\overline{z}_1\overline{z}_2\cdots\overline{z}_{n-1}\overline{z}_n
\end{eqnarray*}
Thus, we conclude that the equation \ref{eq:multip_conj} holds.
\end{proof}

\subsection{Isometries in the Gauss Plane}\label{subsec:isomet}
The isometries with which we are interested are reflections, translations, and rotations. The propositions in this section, except for \ref{prop:homomorphism}, are available--without proofs--in \cite{dodge_euclidean_2004}. We introduce reflections in the Gauss plane by means of the complex conjugates. Since the reflection is a mirror about a line, we write:
\begin{proposition}
A reflection $z'$ of $z\in\mathbb{C}$ in the $x$-axis has the equation:
\begin{equation}
z'=\overline{z}
\end{equation}
\end{proposition}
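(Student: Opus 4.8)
The plan is to reduce this geometric statement to the definition of the complex conjugate by working in rectangular coordinates. First I would write $z$ in its rectangular form $z = x + \mi y$, so that the point $Z$ it represents has Cartesian coordinates $(x,y)$ in the Gauss plane, with the real axis playing the role of the $x$-axis.

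Next I would make precise what the reflection in the $x$-axis does as a point map, rather than merely asserting the formula. The defining property of a mirror reflection across a line is that the line is the perpendicular bisector of the segment joining each point to its image. Applying this to the real axis, the segment from $Z$ to its image $Z'$ must be orthogonal to the axis (hence vertical, so the real part is preserved) and bisected by it (so its midpoint lies on the axis, forcing the imaginary part to change sign). This pins down the image of $(x,y)$ as $(x,-y)$.

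Then I would translate the image back into complex notation: the point $(x,-y)$ corresponds to the complex number $x - \mi y$. By the definition of the conjugate recalled earlier, namely $\overline{z} = x - \mi y$, this gives $z' = \overline{z}$, which is exactly the claimed equation.

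I do not expect a genuine obstacle here, since the result is essentially the geometric content built into the definition of conjugation. The only step that warrants care is the second one: justifying that the mirror image of $(x,y)$ across the real axis is precisely $(x,-y)$, instead of taking it as self-evident. Supplying the perpendicular-bisector argument above removes that gap and makes the remaining identification with $\overline{z}$ immediate.
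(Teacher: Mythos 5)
Your proof is correct and follows essentially the same route as the paper's: write $z=x+\mi y$, observe that reflection in the real axis sends $(x,y)$ to $(x,-y)$, and identify $x-\mi y$ with $\overline{z}$. The only difference is that you justify the step $(x,y)\mapsto(x,-y)$ via the perpendicular-bisector property of reflections, whereas the paper simply asserts $x'=x$, $y'=-y$; your added detail is a small improvement in rigor, not a different argument.
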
 
\begin{proof}
Because the reflection is in the $x$-axis, we have the real and imaginary components of $z$ in the mirror as $x'=x$ and $y'=-y$. Thus, $z'=x-\mi y=\overline{z}$.
\end{proof}

A translation shifts the objects in the Gauss plane by a vector $a\in\mathbb{C}$. For a given translation $a=h+\mi k$ arising from the origin, we can find the location of $z'$ initially at $z$. Consequently, we write:
\begin{proposition}
A translation of an object from $z$ to $z'$ by means of $a$ is:
\begin{equation}
  z'=z+a
\end{equation}
\end{proposition}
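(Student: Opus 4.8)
The plan is to establish the translation formula directly from the defining property of a translation: namely, that it shifts every point by the same displacement vector $a$. Since the excerpt has already set up the identification of vectors with complex numbers (so that a vector $\bm{z}$ coincides with the complex number $z$), I can work entirely in $\mathbb{C}$ and treat the displacement as the complex number $a = h + \mi k$.

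First I would express the initial point and the displacement in rectangular form, writing $z = x + \mi y$ and $a = h + \mi k$. A translation by the vector $a$ carries the point with coordinates $(x, y)$ to the point with coordinates $(x + h, y + k)$, since each Cartesian coordinate is shifted independently by the corresponding component of the displacement. Then I would reassemble the image point in rectangular form as $z' = (x + h) + \mi(y + k)$.

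Next I would regroup the real and imaginary parts, using the commutativity and associativity of addition in the field $\mathbb{R}$ (and hence in $\mathbb{C}$), to write $z' = (x + \mi y) + (h + \mi k)$. Recognizing the two grouped terms as $z$ and $a$ respectively gives the desired identity $z' = z + a$. This is essentially a one-line verification once the coordinatewise description of translation is accepted.

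I do not anticipate any genuine obstacle here, since the statement is an immediate consequence of how vector addition in $\mathbb{C}$ models geometric displacement; the only point requiring care is conceptual rather than computational — namely justifying that translating by the vector $a$ corresponds exactly to complex addition of $a$, which follows from the identification of vectors with complex numbers established earlier in the excerpt.
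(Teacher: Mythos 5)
Your proof is correct, but it takes a different route from the paper's. The paper argues geometrically: it regards $z$, $a$, and $z'$ as vectors and invokes the parallelogram law, observing that $z'$ is the diagonal of the parallelogram with sides $z$ and $a$, whence $z' = z + a$. You instead argue in coordinates: you take the defining property of a translation to be the coordinatewise shift $(x,y) \mapsto (x+h, y+k)$, write out the rectangular forms, and regroup using commutativity and associativity of addition. Your version makes explicit the definitional input (what ``translation by $a$'' means at the level of Cartesian coordinates) and reduces the claim to field axioms, so it is slightly more self-contained; its only unproved ingredient is the coordinatewise description of translation, which is standard. The paper's version is shorter and stays within the vector-geometric language it has set up for the Gauss plane, but it leans on the parallelogram law of vector addition as an unstated primitive. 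Both arguments are sound and of comparable rigor; the difference is which description of translation --- geometric composition of displacements versus coordinatewise shift --- is taken as the starting point.
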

\begin{proof}
We suppose $a$ is the translation which acts on $z$. Then, the point associated with $z$ would shift to $z'$. Using the vectors related to $z'$ and $z$, we find that $z'$ is the diagonal of the parallelogram whose sides are $a$ and $z$. Therefore, we conclude that $z'= z+a$.
\end{proof}

We introduce the rotations by means of the unit circle centered in the origin of the Gauss plane. Consider a unit vector arising from the origin to a point on the unit circle, and has the angle $\theta$ as its argument. If we rotate this vector by an angle $\phi$, then we obtain the new position of the vector as $\cis{(\theta+\phi)}$ using the Definition \ref{def:cis}. Hence, the function $\cis$ is used to express rotations in the Gauss plane. Thus, we introduce Proposition \ref{prop:rot} for expressing rotations.
\begin{proposition}\label{prop:rot}
A rotation of an object $z$ about a point $A$ by an angle $\theta$, which results in the vector $z'$, has the following representation:
\begin{equation}
  z'-a=\cis(\theta)(z-a)
\end{equation}
\end{proposition}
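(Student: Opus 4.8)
The plan is to reduce a rotation about an arbitrary point $A$ to a rotation about the origin, which I would treat first. For the origin-centered case, I would write a point in polar form as $z = r\cis(\alpha)$ and show that left-multiplication by $\cis(\theta)$ sends it to $r\cis(\theta + \alpha)$. Since this preserves the amplitude $r$ while increasing the argument by exactly $\theta$, it is precisely the rotation of $z$ through $\theta$ about the origin, thereby extending to all of $\mathbb{C}$ the unit-circle picture described just before Definition \ref{def:cis}.

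The key computation is the angle-addition identity $\cis(\theta)\,\cis(\alpha) = \cis(\theta + \alpha)$. Using Definition \ref{def:cis}, I would expand the product $(\cos\theta + \mi\sin\theta)(\cos\alpha + \mi\sin\alpha)$, collect the real and imaginary parts, and invoke the trigonometric sum formulas for $\cos(\theta+\alpha)$ and $\sin(\theta+\alpha)$. Multiplying through by the scalar $r$ then gives $\cis(\theta)\,z = r\cis(\theta+\alpha)$, which establishes that multiplication by $\cis(\theta)$ realizes rotation about the origin.

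To pass from the origin to an arbitrary center $A$, I would conjugate the origin-rotation by a translation. First I apply the translation carrying $A$ to the origin, that is, translation by $-a$, which sends $z$ to $z - a$ by the translation proposition above. Next I rotate about the origin by multiplying by $\cis(\theta)$, obtaining $\cis(\theta)(z - a)$. Finally I translate back by $a$, which yields $z' = \cis(\theta)(z - a) + a$. Subtracting $a$ from both sides then produces the claimed equation $z' - a = \cis(\theta)(z - a)$.

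I expect the main obstacle to be making the central claim rigorous, namely that multiplication by $\cis(\theta)$ is genuinely a rotation rather than merely asserting it: one must verify both that the modulus is unchanged and that the argument shifts by exactly $\theta$, and one should dispose of the degenerate case $z = a$ separately, where $z - a = 0$ and the identity holds vacuously. The translation steps are routine given the proposition already established, so the substantive content lies entirely in the angle-addition identity together with its geometric interpretation as a rotation about the origin.
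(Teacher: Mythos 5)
Your proposal is correct, and its overall skeleton matches the paper's: the paper also reduces to the center $A$ by forming the difference vectors $r=z-a$ and $r'=z'-a$, applies $\cis(\theta)$ to $r$, and substitutes back---which is exactly your translate--rotate--translate-back conjugation, just written without naming the translations. The genuine difference is in the middle step. The paper's proof simply asserts ``using the rotation function $\cis$, we obtain $r'=\cis(\theta)r$,'' leaning on the informal discussion before Definition~\ref{def:cis} and Proposition~\ref{prop:rot}, which only describes rotating \emph{unit} vectors; it never addresses why multiplication by $\cis(\theta)$ rotates a vector of arbitrary amplitude. You fill precisely this gap: writing $z-a=r\cis(\alpha)$ in polar form and proving $\cis(\theta)\cis(\alpha)=\cis(\theta+\alpha)$ from the trigonometric sum formulas shows the modulus is preserved and the argument shifts by exactly $\theta$, and you even dispose of the degenerate case $z=a$. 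Note that the identity you prove is exactly the paper's Proposition~\ref{prop:homomorphism}, which the paper states and proves only \emph{after} this proposition; so your argument effectively front-loads that lemma, making the proof self-contained at the cost of duplicating work the paper defers. What the paper's version buys is brevity and the (implicit) ordering choice of treating $\cis$ as ``the rotation function'' by fiat; what yours buys is an actual verification of that claim, which is the substantive mathematical content here.
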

\begin{proof}
We let $z'$ be the new vector as a result of the rotation, and $z$ the original vector. Additionally, we let $a$ be the vector associated with the point $A$. Then, we write $r'=z'-a$ and $r=z-a$. Thus, using the rotation function $\cis$, we obtain $r'=\cis(\theta)r$. Therefore, substituting back for the difference vectors $r$ and $r'$, we obtain $z'-a=\cis(\theta)(z-a)$.	
\end{proof}

We observe that the angular quantities are additive; that is, the angles $\theta_i$ associated with successive rotations can be added to obtain the resulting angle as $\theta=\theta_1+\cdots+\theta_n$. We would like to obtain a similar relation to obtain a resulting rotation for given successive rotations. Thus, we provide the following proposition:
\begin{proposition}\label{prop:homomorphism}
The function $\cis :\mathbb{R}\rightarrow\mathbb{S}$ is a homomorphism between $\mathbb{R}$ with the addition and the $\mathbb{S}$ with multiplication operations.
\end{proposition}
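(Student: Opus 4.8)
The plan is to verify directly that $\cis$ carries addition in $\mathbb{R}$ to multiplication in $\mathbb{S}$; that is, the single identity $\cis(\theta_1+\theta_2)=\cis(\theta_1)\,\cis(\theta_2)$ captures the homomorphism property, so the whole proof reduces to establishing this equation for arbitrary $\theta_1,\theta_2\in\mathbb{R}$. Before that, I would check well-definedness of the map into $\mathbb{S}$: every value $\cis(\theta)=\cos\theta+\mi\sin\theta$ satisfies $|\cis(\theta)|^2=\cos^2\theta+\sin^2\theta=1$, so the image indeed lies on the unit circle, and $\mathbb{S}$ is closed under complex multiplication, which makes the target group sensible in the first place.

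The computation itself is the heart of the argument. First I would expand the product using Definition \ref{def:cis} and the distributive law in $\mathbb{C}$, writing $\cis(\theta_1)\cis(\theta_2)=(\cos\theta_1+\mi\sin\theta_1)(\cos\theta_2+\mi\sin\theta_2)$. Collecting real and imaginary parts and using $\mi^2=-1$ gives the real part $\cos\theta_1\cos\theta_2-\sin\theta_1\sin\theta_2$ and the imaginary part $\sin\theta_1\cos\theta_2+\cos\theta_1\sin\theta_2$. Then I would invoke the classical angle-addition identities for cosine and sine to recognize these two expressions as $\cos(\theta_1+\theta_2)$ and $\sin(\theta_1+\theta_2)$, respectively, whence the product collapses to $\cos(\theta_1+\theta_2)+\mi\sin(\theta_1+\theta_2)=\cis(\theta_1+\theta_2)$, as required.

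Since no step presents a genuine obstacle, the only point demanding care is logical dependence: the argument rests entirely on the sine and cosine angle-addition formulas, so these must be taken as known (they are standard and independent of the present development). If one wished to be fully self-contained, the main work would shift to proving those identities, for instance via the rotation interpretation of $\cis$ already introduced in Proposition \ref{prop:rot}; but for the purposes of this paper I would simply cite them. With the single identity in hand, the homomorphism claim follows immediately, and I would close by noting that it extends by induction to any finite number of summands, matching the additivity of successive rotation angles remarked upon just before the statement.
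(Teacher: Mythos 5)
Your proposal is correct and follows essentially the same route as the paper: both establish $\cis(\theta_1+\theta_2)=\cis(\theta_1)\,\cis(\theta_2)$ by expanding via the sine and cosine angle-addition formulas (you expand the product and recognize the sum, the paper expands the sum and factors it into the product, which is the same computation read in opposite directions), and both extend to $n$ summands by induction. Your added check that the image lies in $\mathbb{S}$ is a small point of rigor the paper omits, but it does not change the substance of the argument.
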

\begin{proof}
We claim that:
\begin{equation*}
\cis(\theta_1+\theta_2+\cdots+\theta_n)=\cis(\theta_1)\cdot\cis(\theta_2)\cdots\cis(\theta_n)
\end{equation*}
We pick $\theta_1,\theta_2\in\mathbb{R}$. Then, we write
\begin{eqnarray*}
\cis(\theta_1+\theta_2)&=&\cos(\theta_1+\theta_2)+\mi\sin(\theta_1+\theta_2)\\
& = & \cos(\theta_1)(\cos(\theta_2)+\mi \sin(\theta_2))\\
{}&&+\mi^2\sin(\theta_1)(\sin(\theta_2)\\
{}&&-\mi \cos(\theta_2))\\
& = &\cis(\theta_1)\cdot\cis(\theta_2)
\end{eqnarray*}
Now, we suppose that 
\begin{eqnarray*}
\cis(\theta_1+\theta_2+\cdots+\theta_{n-1}) &=& \cis(\theta_1)\cdot\cis(\theta_2)\\
{}&&\cdots\cis(\theta_{n-1})
\end{eqnarray*}
So, we can write
\begin{eqnarray*}
\cis(\theta_1+\cdots+\theta_n)& = &\cis(\theta_1+\cdots+\theta_{n-1})\\
{}&&\cdot\cis(\theta_n)\\
& = & \cis(\theta_1)\cdot\cdots\cdot\cis(\theta_{n-1})\\
{}&&\cdot\cis(\theta_n)
\end{eqnarray*}
Therefore, we conclude that $\cis$ is a homomorphism between $\mathbb{R}$ with addition, and $\mathbb{C}$ with multiplication.
\end{proof}
We can use $\cis$ and $\overline{\cis}$ as a pair to express the unit circle $\mathbb{S}$. Consequently, we write:
\begin{proposition}\label{prop:10}
We let $\theta\in\mathbb{R}$. The unit circle $\mathbb{S}$ can be represented as $\cis(\theta)\overline{\cis}(\theta)=1$.
\end{proposition}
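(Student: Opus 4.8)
The plan is to prove that $\cis(\theta)\overline{\cis}(\theta) = 1$ for all $\theta \in \mathbb{R}$, which shows that points of the form $\cis(\theta)$ lie on the unit circle $\mathbb{S}$. The most direct route is a straightforward computation using the definitions already established. First I would write out $\cis(\theta) = \cos(\theta) + \mi\sin(\theta)$ from Definition \ref{def:cis}, and then form its conjugate $\overline{\cis}(\theta) = \cos(\theta) - \mi\sin(\theta)$, which follows from the definition of the conjugate of a complex number.

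Next I would multiply these two expressions together. Recognizing the product as a difference of two squares, I expect
\begin{equation*}
\cis(\theta)\overline{\cis}(\theta) = (\cos(\theta) + \mi\sin(\theta))(\cos(\theta) - \mi\sin(\theta)) = \cos^2(\theta) - \mi^2\sin^2(\theta).
\end{equation*}
Using $\mi^2 = -1$ from the definition of $\mathbb{C}$, this reduces to $\cos^2(\theta) + \sin^2(\theta)$.

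The final step is to invoke the Pythagorean identity $\cos^2(\theta) + \sin^2(\theta) = 1$, which yields the claimed result $\cis(\theta)\overline{\cis}(\theta) = 1$. Since this equals $|\cis(\theta)|^2$, the equation characterizes exactly those complex numbers whose modulus is $1$, namely the points on the unit circle $\mathbb{S}$, so the representation is justified.

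Honestly, there is no serious obstacle here — the whole argument is a one-line algebraic identity resting on the Pythagorean theorem, and the main decision is simply how explicitly to carry out the expansion. If one wished to be more careful about the logical content, the only subtle point worth remarking is the direction of the equivalence: the computation above shows every $\cis(\theta)$ satisfies the equation, and conversely any $z$ with $z\overline{z} = 1$ has $|z|^2 = 1$ and hence lies on $\mathbb{S}$, so the condition genuinely characterizes the unit circle rather than merely being a necessary consequence of membership.
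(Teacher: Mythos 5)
Your proof is correct and follows essentially the same route as the paper's: expand $\cis(\theta)\overline{\cis}(\theta)=(\cos(\theta)+\mi\sin(\theta))(\cos(\theta)-\mi\sin(\theta))$ and apply the Pythagorean identity to get $1$. Your closing remark on the converse direction (that any $z$ with $z\overline{z}=1$ lies on $\mathbb{S}$) is a small but worthwhile addition the paper omits, since it justifies calling the equation a \emph{representation} of the unit circle rather than merely a property of points on it.
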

\begin{proof}
Pick $\theta\in\mathbb{R}$. Then, $\cis(\theta)\cdot\overline{\cis}(\theta)=(\cos(\theta)+\mi\sin(\theta))\cdot(\cos(\theta)-\mi \sin(\theta))=\cos^2(\theta)+\sin^2(\theta)=1$. Therefore, $\cis(\theta)\cdot\overline{\cis}(\theta)=1$ represents the unit circle.
\end{proof}

\begin{figure}[!t]
 \begin{center}
   \begin{tikzpicture}[scale=3]
 \node (C2) {$\mathbb{C}^2$};
  \node (R2) [below right of=C2, node distance=2cm] {$\mathbb{R}^2$};
  \node (P) [above right of=R2,node distance=2cm] {$\mathbb{S}$};
  \draw[->] (C2) to node [left] {$(x,y)$} (R2);
  \draw[->] (R2) to node [right] {$\alpha$} (P);
  \draw[->] (C2) to node [above] {$\theta$} (P);
\end{tikzpicture}
 \end{center}
 \caption{Commutative diagram between $\mathbb{S}$ and product sets $\mathbb{C}$, $\mathbb{R}^2$.}
 \label{fig:commutative}
\end{figure}
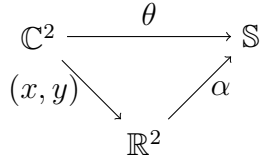
The pair $(\cis,\overline{\cis})\in\mathbb{S}^2\subset\mathbb{C}^2$ is used in the parametrization of position equations. Specifically, we would like to obtain relations between the unit circle $\mathbb{S}$, complex numbers $\mathbb{C}^2$ and the Cartesian coordinates in $\mathbb{R}^2$. Thus, we define $(x,y):\mathbb{C}^2\rightarrow\mathbb{R}^2$ with 
$x(\cis,\overline{\cis})=(\cis+\overline{\cis})/2$, $y(\cis,\overline{\cis}) = 
(-\mi(\cis-\overline{\cis}))/2$. Similarly, we define 
$\alpha:\mathbb{R}^2\rightarrow \mathbb{S}$ as $\alpha = \itan{(y,x)}$. Hence, we draw the Fig. \ref{fig:commutative} to obtain the function $\theta:\mathbb{C}^2\rightarrow\mathbb{S}$. Consequently, $\theta=\alpha\circ(x,y)$, and therefore 
\begin{equation}\label{eq:theta}
\theta=\itan\left(-\mi(\cis-\overline{\cis}),(\cis+\overline{\cis})\right)
\end{equation}


\section{Position Equations}\label{sec:pos_equations}
The equations describing the posture of the in-parallel planar manipulator, which has three degrees of freedom, can be obtained by means of loop-closure equations. 

As an initial step, we would like to obtain the number of loops, which determines the quantity of the loop-closure equations, for this manipulator. A general formula for calculating the number of loops is given by Waldron and Kinzel \cite{waldron_kinzel_kinematics_dynamics_design_machinery_2004} as $L=j+1-n$, with $j$ being the total number of joints, and $n$ being the total number of links. We apply this formula to the 3RPR manipulator in Fig. \ref{fig:rpr-param}; we observe that $j=9$, $n=8$. Thus, we calculate $L=9+1-8=2$ loops; therefore, we can write two independent loop-closure equations for this manipulator. For the 3RPR manipulator shown in Fig. \ref{fig:rpr-param}, we let $s_a=|AO_a|$, $s_b=|BO_b|$, and $s_c=|CO_c|$ be the lengths of each connectors. We denote the unknown joint variables by $\theta_a$, $\theta_b$, $\theta_c$, and $\alpha$. 
\begin{figure}[!t]
 \begin{center}
  \begin{tikzpicture}
[scale=0.75]
\draw(0,0)--(0.70711,0.70711);
\draw(0.70711,0.70711)--++(135:0.2)--++(45:1);
\draw(0.70711,0.70711)--++(-45:0.2)--++(45:1);
\draw(45:1.2)--(2.61805,2.61805);
\draw(45:1.2)--++(135:0.1);
\draw(45:1.2)--++(-45:0.1);
\draw[black!20,fill] (2.61805,2.61805) -- (5.29389,3.97447) -- (1.5,5.40192) -- (2.61805,2.61805);
\draw(2.61805,2.61805)--(5.29389,3.97447);
\draw(2.61805,2.61805)--(1.5,5.40192);
\draw[fill=white] (0,0) circle (0.2) node (ja1) {};
\draw (0.4,-0.4) -- (0.4,0) arc (0:180:.4) (-0.4,0) -- (-0.4,-0.4) -- 
(0.4,-0.4) node [left=20pt] {$A$};
  \foreach \x in {-0.4, -0.2, 0, 0.2, 0.4}
    \draw (\x,-0.4) -- (\x-0.1,-0.5);
\draw[fill=white] (2.61805,2.61805) circle (0.2) node [below right=0pt] (ja3) {$O_a$};
\draw(6,0)--(5.82508,0.98458);
\draw(5.82508,0.98458)--++(190:0.2)--++(100:1);
\draw(5.82508,0.98458)--++(10:0.2)--++(100:1);
\draw(5.79009,1.1815)--(5.29389,3.97447);
\draw(5.79009,1.1815)--++(190:0.1);
\draw(5.79009,1.1815)--++(10:0.1);
\begin{scope}[shift = {(6, 0)}]
   \draw[fill=white] (0,0) circle (0.2) node (jb1) {};
  \draw (0.4,-0.4) -- (0.4,0) arc (0:180:.4) (-0.4,0) -- (-0.4,-0.4) -- 
(0.4,-0.4) node [left=20pt] {$B$};
  \foreach \x in {-0.4, -0.2, 0, 0.2, 0.4}
    \draw (\x,-0.4) -- (\x-0.1,-0.5);
  \end{scope}
\draw[fill=white] (5.29389,3.97447) circle (0.2) node [right=4pt](jb3){$O_b$};
\draw (3,8) -- (2.5,7.13397);
\draw (2.5,7.13397)--++(-30:0.2)--++(-120:1);
\draw (2.5,7.13397)--++(150:0.2)--++(240:1);
\draw (2.4,6.96077)--(1.5,5.40192);
\draw (2.4,6.96077)--++(-30:0.1);
\draw (2.4,6.96077)--++(150:0.1);
  \begin{scope}[shift = {(3,8)}, rotate = 180]
   \draw[fill=white] (0,0) circle (0.2) node (j13) {};
  \draw (0.4,-0.4) -- (0.4,0) arc (0:180:.4) (-0.4,0) -- (-0.4,-0.4) -- 
(0.4,-0.4) node [below right=14pt] {$C$};
  \foreach \x in {-0.4, -0.2, 0, 0.2, 0.4}
    \draw (\x,-0.4) -- (\x-0.1,-0.5);
  \end{scope}
\draw[fill=white] (1.5,5.40192) circle (0.2) node [left=4pt] (jc3) {$O_c$};
\draw[->, very thin] (-0.05,0) -- (2.5,0) node [below] {$\mathrm{Re}$};
\draw[->, very thin] (0,-0.05) -- (0,2.5) node [left] {$\mathrm{Im}$};
\draw [->] (0.75,0) arc (0:45:0.75) node [below right=3pt] (ta1) {};
\node [right=-2pt] at (ta1) {$\theta_a$};
\draw [->] (6.75,0) arc (0:100:0.75) node [right=14pt] (tb1) {$\theta_b$};
\draw [very thin] (6,0) -- (7,0);
\draw [->] (3.75,8) arc (0:240:0.75) node [above left=6pt] (tc1) {$\theta_c$};
\draw [very thin] (3,8) -- (4,8);
\draw [->] (3.36805,2.61805) arc (0:27:0.75) node [below right=1pt] (alpha) {};
\node [right=-3pt] at (alpha) {$\alpha$};
\draw [very thin] (2.61805,2.61805) -- (3.61805,2.61805);
\draw (3.28701,2.95715) arc (27:112:0.75) node [above right=3pt] (beta) {};
\node[right=2pt] at (beta) {$\beta$};
\end{tikzpicture}
 \end{center}
 \caption{The 3RPR in-parallel planar manipulator. The moving platform is in gray color.}
 \label{fig:rpr-param}
\end{figure}
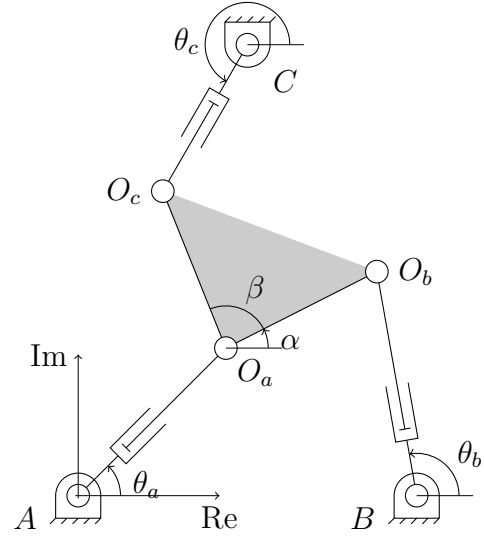

The first loop that we will consider is through the joints at $A$, $O_a$, $O_b$, and $B$. We let $d_{ab}$ be the vector arising from the origin at $A$ to the point $B$, $o_a$ and $o_b$ be the vectors originating from $A$, which are associated with the points $O_a$ and $O_b$. We would like to obtain the loop-closure equation in terms of $\cis$, $\overline{\cis}$, and the prismatic joint variable $s$. Thus, we pick a line $AA'$, which connects the point $A$ at the origin of the coordinate frame to the point $A'$ at center of the chamber of the prismatic joint as in Fig. \ref{fig:rpr-param}, collinear with the real axis of the coordinate frame.  Then, we perform a rotation $AA'$ by an angle $\theta_a$, followed by a translation by an amount of $A'O_a$ along the rotated line. This motion sequence results in the position at $O_a$, the associated vector of which can be expressed as $(|AA'|+|A'O_a|)\cis(\theta_a)$; hence, $|AO_a|\cis(\theta_a)$. Therefore, we can write $o_a=s_a\cis(\theta_a)$. Next, we consider the rotation which transforms the side of the moving platform by an angle $\alpha$. Using the Proposition \ref{prop:rot}, we write $o_b-o_a=\cis(\alpha)(o_b'-o_a)$, where $o_b'$ is the vector associated with $O_b$ before the rotation. But $o_{ab}=o_b'-o_a$; thus, $o_b=o_a+\cis(\alpha)l_{ab}$, where $l_{ab}=|o_{ab}|=|O_aO_b|$. As a next step, we draw a line parallel to the real axis and passing through $O_b$; thus, from the transversal passing through $B$ and $O_b$, we find the angle associated with the joint at $O_b$ as $\theta_b+\pi$. Then, denoting endpoint of the piston of the prismatic joint as $B'$ in the Figure \ref{fig:rpr-param}, we calculate the position of the point $B'$, which is a result of rotation about $O_b$, as $b'-o_b=\cis(\theta_b+\pi)|O_bB'|$. Furthermore, we write the translation from $B'$ to $B$ as $b=b'+\cis(\theta_b+\pi)|B'B|$. Eliminating $b'$ and substituting $s_b=|O_bB'|+|B'B|$, we obtain $b=o_b+\cis(\theta_b+\pi)s_b$. Then, using the Proposition \ref{prop:homomorphism}, we write $b=o_b+\cis(\theta_b)\cis(\pi)s_b=o_b-\cis(\theta_b)s_b$. As a final step for closing the loop, we translate $b$ by $d_{ba}$ to obtain $a=b+d_{ba}$; additionally, noting $d_{ba}=-d_{ab}$, and $a=0$ for $A$ being the origin of the fixed frame, we write $0=b-d_{ab}$. Back substituting $b$ into this equation, we write $o_b-\cis(\theta_b)s_b-d_{ab}=0$. Then, eliminating $o_b$, we write $o_a+\cis(\alpha)l_{ab}-\cis(\theta_b)s_b-d_{ab}=0$. Finally, we eliminate $o_a$ to obtain:
\begin{equation}\label{eq:lce1}
s_a \cis_a+l_{ab} \cis_\alpha-s_b \cis_b-d_{ab}=0
\end{equation}
where we abbreviate $\cis(\theta_a)$, $\cis(\alpha)$, $\cis(\theta_b)$ with $\cis_a$, $\cis_\alpha$, and $\cis_b$ respectively. Thus, Eq. \ref{eq:lce1} is the first loop-closure equation.

For the second loop-closure equation, we use the expression for $o_a$ as found in the derivation of the Eq. \ref{eq:lce1}. Denoting $l_{ac}=|O_aO_c|$, we use the Proposition \ref{prop:rot} to obtain $o_c=o_a+\cis(\alpha+\beta)l_{ac}$. Then, by the Proposition \ref{prop:homomorphism}, $o_c=o_a+\cis(\alpha)\cis(\beta)l_{ac}$. As a next step, we draw lines parallel to the real axis of the coordinate system passing through points $O_c$ and $C$. From a transversal through $CO_c$, we find the rotational angle associated with the joint at $O_c$ as $\theta_c-\pi$. We denote the end point of the piston as $C'$ as in Fig. \ref{fig:rpr-param}; thus, we obtain $c'=o_c+\cis(\theta_c-\pi)|O_cC'|$. Using the Proposition \ref{prop:homomorphism}, we write $c'=o_c+\cis(\theta_c)\cis(-\pi)|O_cC'|$; therefore, $c'=o_c-\cis(\theta_c)|O_cC'|$. Additionally, the translation of $c'$ to $c$ can be represented as $c=c'+\cis(\theta_c-\pi)|C'C|=c'-\cis(\theta_c)|C'C|$. Noting $s_c=|O_cC'|+|C'C|$, we eliminate $c'$ to obtain $c=o_c-\cis(\theta_c)s_c$. To calculate $a$, we translate $c$ by an amount $d_{ca}=-d_{ac}$ so that $a=c-d_{ac}$. Using $a=0$ and substituting for $c$, we write $o_c-\cis(\theta_c)s_c-d_{ac}=0$. Then, we eliminate $o_c$ to obtain $o_a+\cis(\alpha)\cis(\beta)l_{ac}-\cis(\theta_c)s_c-d_{ac}=0$. We substitute $o_a$ to write Eq. \ref{eq:lce2} as the second loop-closure equation.
\begin{equation}\label{eq:lce2}
s_a\cis_a+l_{ac}\cis_\beta\cis_\alpha-s_c\cis_c-d_{ac}=0
\end{equation}
We use $\cis_c$ for $\cis(\theta_c)$ and $\cis_\beta$ for $\cis(\beta)$ for convenience. We seek parametrization based on $\cis$ and $\overline{\cis}$ representations of the unit circle in $\mathbb{C}^2$. Using the equations \ref{eq:additive_conj} and \ref{eq:multip_conj}, we take the conjugates of the equations \ref{eq:lce1} and \ref{eq:lce2}. Thus, we obtain the conjugates given by the equations \ref{eq:colce1} and \ref{eq:colce2}. 
\begin{equation}\label{eq:colce1}
s_a\overline{\cis}_a+l_{ab}\overline{\cis}_\alpha-s_b\overline{\cis}_b-\overline{d}_{ab}=0
\end{equation}
\begin{equation}\label{eq:colce2}
s_a\overline{\cis}_a+l_{ac}\overline{\cis}_\beta\overline{\cis}_\alpha-s_c\overline{\cis}_c-\overline{d}_{ac}=0
\end{equation}


\section{Conclusion}
This short paper is about obtaining the equations describing the position of a 3RPR in-parallel planar manipulator in the Gauss plane. We utilize isometries in writing loop-closure equations. The resulting equations with their complex conjugates constitute four polynomial equations. 

We note that parametric position equations of other in-parallel planar manipulators can be obtained with a similar way to the method in this paper. We believe the position equations can be used in analysis and control of in-parallel planar manipulators. 


\bibliography{sahin-brief}
\end{document}